\documentclass[lettersize,journal]{IEEEtran}
\IEEEoverridecommandlockouts     
\usepackage{algorithm}
\usepackage{array}
\usepackage{enumitem}

\usepackage{stfloats}
\usepackage{tabularx}
\usepackage{url}
\usepackage{verbatim}
\usepackage{subcaption}
\usepackage{cite}
\usepackage{stackengine}
\usepackage{booktabs}
\usepackage{algorithmic}
\usepackage{graphics} 
\usepackage{epsfig} 
\usepackage{textcomp}
\usepackage{xcolor}
\usepackage{amsthm}
\usepackage{bm}
\usepackage{comment}
\usepackage[font=small,labelfont=bf]{caption}
\usepackage[symbol]{footmisc}

\usepackage{amsmath,amssymb,amsfonts}
\usepackage{textcomp}
\usepackage[colorlinks=True,allcolors=blue]{hyperref}
\usepackage{cleveref}
\usepackage[margin=1in]{geometry}
\usepackage{tcolorbox} 
\tcbuselibrary{breakable}
\makeatletter
\def\tcb@parbox@use@false{%
  \def\@parboxrestore{\linewidth\hsize\let\@parboxrestore=\tcb@parboxrestore}%
}
\makeatother

\usepackage{nicefrac}

\newtheorem{remark}{Remark}
\newtheorem{lemma}{Lemma}

\newtheorem{definition}{Definition}
\newtheorem{corollary}{Corollary}
\newtheorem{theorem}{Theorem}

\renewcommand{\ddddot}[1]{%
  {\mathop{\kern\z@#1}\limits^{\vbox to-1.4\ex@{\kern-\tw@\ex@
   \hbox{\normalfont....}\vss}}}}

\DeclareMathOperator*{\argmin}{arg\,min}

\def\BibTeX{{\rm B\kern-.05em{\sc i\kern-.025em b}\kern-.08em
    T\kern-.1667em\lower.7ex\hbox{E}\kern-.125emX}}

\makeatletter
\renewcommand{\subsubsection}{%
  \@startsection{subsubsection}{3}{0.93\parindent}%
  {0.1ex plus 0.1ex minus 0.1ex}%
  {0.1ex}%
  {\normalfont\normalsize\itshape}}
\makeatother

\begin{document}
 
\title{Boundary Sampling to Learn Predictive Safety Filters via Pontryagin's Maximum Principle}

\author{James Dallas$^{1*}$, Thomas Lew$^{1}$, John Talbot$^{1}$, Jonathan DeCastro$^{1}$, Somil Bansal$^{2}$, and John Subosits$^{1}$
\thanks{$^{1}$ Toyota Research Institute, Los Altos, CA 94022, USA. Contact emails:
\{james.dallas, john.talbot, thomas.lew, jonathan.decastro, john.subosits\}@tri.global.}

\thanks{$^{2}$ Department of Aeronautics and Astronautics, Stanford University, Stanford, CA 94305, USA. Contact email: somil@stanford.edu.}

\thanks{Corresponding author james.dallas@tri.global.}     
}

\maketitle

\begin{abstract}
Safety filters provide a practical approach for enforcing safety constraints in autonomous systems. 
While learning-based tools scale to high-dimensional systems,  
their performance depends on informative data that includes states likely to lead to constraint violation, which can be difficult to efficiently sample in complex, high-dimensional systems. 
In this work, we characterize trajectories that barely avoid safety violations using the Pontryagin Maximum Principle. 
These boundary trajectories are used to guide data collection for learned Hamilton-Jacobi Reachability, concentrating learning efforts near safety-critical states to improve efficiency. 
The learned Control Barrier Value Function is then used directly for safety filtering.  
Simulations and experimental validation on a shared-control automotive racing application 
demonstrate PMP sampling improves learning efficiency, yielding faster convergence, reduced failure rates, and improved safe set reconstruction, with wall times around $3\, \mathrm{ms}$.
\end{abstract}



\section{Introduction}

As autonomous systems become increasingly widespread, ensuring their safety is critical. Rather than embedding safety constraints within a nominal controller, one can enforce closed-loop constraint satisfaction with a downstream safety filter \cite{WABERSICH2021109597, ames2019control}. \

Control barrier functions (CBFs) are a widely used framework for safety filtering. Typically implemented as constraints in a quadratic program (QP) \cite{ames2019control}, CBF-based safety filters minimally modify the output of a nominal controller while guaranteeing that the system remains within a prescribed safe set. CBFs   have demonstrated success in many applications such as adaptive cruise control, lane keeping, and vehicle stability control \cite{ames2019control, AVEC2024,Dallas_2025}.
Despite these successes, constructing valid CBFs often requires expert design, and standard CBF-based safety filters are inherently myopic, as they enforce safety using only instantaneous information about the system state which can lead to abrupt, suboptimal, intervention and failure over a horizon \cite{GARG2024100945}. 

A response to addressing the myopic behavior of CBFs is to incorporate prediction, such as by embedding CBF constraints into model predictive control formulations  \cite{dtcbf},  enforcing the safety filtering constraints over a finite horizon \cite{WABERSICH2021109597}, or lifting the CBF conditions to trajectory space \cite{TSCBF}. Although these methods introduce lookahead predictions into the safety filter, they do so at the cost of increased computational complexity and reliance on nonlinear optimization. These methods also still require finding a valid CBF, which is difficult for complex systems. 
 
Hamilton-Jacobi (HJ) reachability provides a complementary, \textit{constructive} perspective by explicitly encoding trajectory lookahead through the computation of backward reachable tubes \cite{HJOverview}. Control barrier value functions (CBVFs) \cite{CBVF} further link HJ reachability with CBF-based safety filtering, enabling predictive safety guarantees within a QP framework. Exact HJ methods, however, suffer from the curse of dimensionality,  limiting their applicability to high-dimensional systems \cite{HJOverview}. 
Learning-based approximations of HJ Reachability \cite{deepreach} (and of other partial differential equations \cite{RAISSI, MCCLENNY2023111722}) improves scalability; however, such approaches can struggle to accurately approximate value functions, particularly for high dimensional systems with complex dynamics and constraints \cite{feng2025}. Learning safety filters for these complex systems can be improved through supervision with MPC or reinforcement learning, but these approaches remain data-inefficient as they do not explicitly target boundary-defining, safety-critical states \cite{feng2025, oh2025safetyagencyhumancenteredsafety}.

\begin{figure}[!t]
    \centering
    \includegraphics[width=0.46\linewidth]{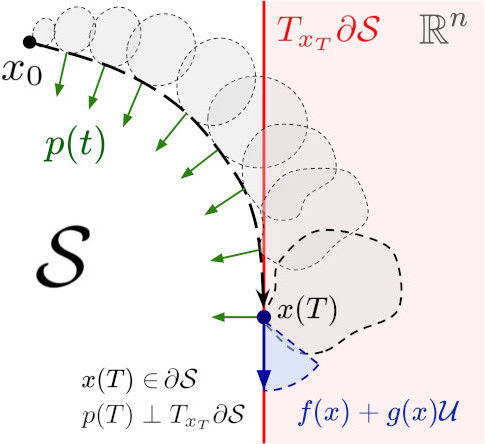}
    \includegraphics[width=0.52\linewidth]{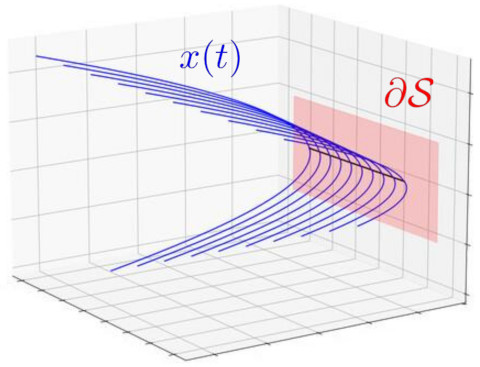}
    \caption{\textit{Left}: Extremal trajectory that barely stays in the safe set $\mathcal{S}$ and reachable sets from $x_0$. 
    \textit{Right}: Generation of boundary trajectories to learn a safety filter.}
    \vspace{-2em}
    \label{fig:extremal}
\end{figure}

As such, scalably and efficiently embedding predictive capabilities in safety filters remains an open challenge. 
In this work, we address this challenge by generating informative training samples for learned HJ Reachability. We observe that the geometry of the reachable sets is characterized by trajectories that evolve to barely satisfy safety constraints. The Pontryagin Maximum Principle (PMP) \cite{Agrachev2004,Trelat2012,BonnardChyba2003}
provides conditions that characterize such boundary trajectories, which could therefore be used to guide data collection. While the PMP has been previously leveraged to study the structure and estimation of forward reachable sets \cite{Kurzhanski2000,Natherson2025,lew2024convexhullsreachablesets}, its potential to improve the data efficiency of learned HJ reachability and CBVF-based safety filters has not been explored. 
The main contributions of this work are:
\begin{itemize}[leftmargin=6mm]
    \item We derive PMP-based optimality conditions that characterize trajectories that barely avoid constraint violation, thereby defining the boundary of the backward reachable tube.
    \item We leverage these PMP boundary trajectories to guide data collection for learned HJ reachability,  significantly improving training efficiency and accuracy in complex high-dimensional systems.
    \item We construct a learned control barrier value function (CBVF) encoding future trajectory information and demonstrate its effectiveness as a predictive, minimally invasive safety filter for human-driven vehicles operating at the limits of handling.
\end{itemize}

\section{Problem Setting and Background} \label{sec:bac}
We consider systems with control-affine dynamics
\begin{flalign}\label{eq:affine}
    &\dot{x_t} = f(x_t) + g(x_t)u_t,
\end{flalign}
with state $x_t \in \mathbb{R}^{n}$, input ${u_t\in \mathbb{R}^m}$, and 
 smooth functions $f:\mathbb{R}^n\to\mathbb{R}^n$ and $g:\mathbb{R}^n\to\mathbb{R}^{n\times m}$ ($f,$ $g$ are 
 Lipschitz and differentiable with Lipschitz Jacobians). 
 
 Safety specifications are encoded as state constraints $\mathcal{S} = \{x_t \in \mathbb{R}^n : h(x_t) \geq 0\}$,
where $h:\mathbb{R}^n \to \mathbb{R}$ is continuously differentiable. The objective is to design control policies that ensure the system remains within 
$\mathcal{S}$ for all future times: $x_0\in\mathcal{S} \implies x_t\in\mathcal{S}, ~ \forall t\geq 0$.
 
\subsection{Control Barrier Functions}
An approach to enforcing safety as a state constraint is through CBFs, which ensure safety by rendering 
$\mathcal{S}$  forward invariant.
\begin{definition}[Forward invariance]\label{def:invar} The set ${\mathcal{S} \subset \mathbb{R}^n}$ is forward invariant if  ${x_0 \in \mathcal{S} \Rightarrow x_t \in \mathcal{S},  \forall t \geq 0}$.
\end{definition}
Rendering the set $\mathcal{S}$ forward invariant amounts to enforcing nonegativity of $h(x)$. Control barrier functions provide sufficient conditions to guarantee this property.

\begin{definition}
[Control barrier function (CBF) \cite{ames2019control}]\label{def:CBF} 
The continuously differentiable function 
${h: \mathbb{R}^n \rightarrow \mathbb{R}}$ is a
CBF for the system \eqref{eq:affine} on $\mathcal{S}$ if there exists a scalar $\gamma>0$ such that
\begin{equation}\label{eq:CBF_condition}
\sup_{u_t\in \mathbb{R}^m} \big(L_f h(x_t) + L_g  h(x_t) u_t\big) \ge - \gamma \, h(x_t)
\end{equation}
for all $x_t \in \mathcal{S}$,
where $\dot{h}(x_t,u_t):=L_f h(x_t) + L_g  h(x_t) u_t$ is defined by the Lie derivatives ${L_f h}$ and ${L_g h}$.
\end{definition}
\begin{theorem}[CBFs and forward invariance \cite{ames2019control}]\label{thm:safety}
If $h$ is a CBF for system \eqref{eq:affine} on $\mathcal{S}$ for some $\gamma>0$, then any locally Lipschitz continuous controller ${\xi : \mathbb{R}^n \to \mathbb{R}^m}$ such that ${u_t=\xi(x_t)}$ satisfies
\begin{equation}\label{eq:safety_cond_control}
 L_f h(x_t) + L_g  h(x_t) u_t \geq - \gamma \,h(x_t)
\ \ \forall x \in \mathcal{S}
\end{equation}
 renders the set $\mathcal{S}$ forward invariant, i.e., safe w.r.t.~\eqref{eq:affine}.
\end{theorem}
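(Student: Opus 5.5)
The plan is to reduce forward invariance to a scalar differential inequality along closed-loop trajectories and then apply a comparison argument. Fix a locally Lipschitz controller $\xi$ satisfying \eqref{eq:safety_cond_control}. Since $f$, $g$ and $\xi$ are locally Lipschitz, the closed-loop vector field $F(x) := f(x)+g(x)\xi(x)$ is locally Lipschitz. By Picard--Lindel\"of, for every $x_0$ there is a unique solution $x_t$ on a maximal interval $[0,\tau_{\max})$. I will prove invariance on this interval; if $\mathcal{S}$ is compact or solutions are otherwise known to be global, this gives all $t\ge 0$.

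Define $\eta(t) := h(x_t)$. Since $h$ is $C^1$ and $x_t$ is $C^1$ in $t$, $\eta$ is $C^1$ and
\[
\dot\eta(t) = L_f h(x_t) + L_g h(x_t)\,\xi(x_t).
\]
The hypothesis gives $\dot\eta(t) \ge -\gamma\,\eta(t)$, but only while $x_t\in\mathcal{S}$. The main obstacle is therefore circularity: the inequality is assumed only on $\mathcal{S}$, yet we want to conclude that the trajectory stays in $\mathcal{S}$.

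To break this I will argue by contradiction. Suppose $x_0\in\mathcal{S}$, so $\eta(0)\ge 0$, and that $\eta(t_1)<0$ for some $t_1<\tau_{\max}$. Let
\[
t_0 := \sup\{t\in[0,t_1] : \eta(t)\ge 0\}.
\]
By continuity, $\eta(t_0)=0$ and $\eta<0$ on $(t_0,t_1]$.

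The inequality cannot be used on $(t_0,t_1]$, because those states lie outside $\mathcal{S}$. Here I would use the standard refinement: the CBF property extends to an open neighborhood $D\supset\mathcal{S}$, which is implicit in the cited result \cite{ames2019control}, where $\gamma h$ is an extended class-$\mathcal{K}$ function defined on $D$. On $D$ the inequality $\dot\eta\ge-\gamma\eta$ holds. Multiplying by $e^{\gamma t}$ gives
\[
\frac{d}{dt}\bigl(e^{\gamma t}\eta(t)\bigr)\ge 0,
\]
so on any interval where the trajectory remains in $D$,
\[
\eta(t)\ge e^{-\gamma (t-t_0)}\eta(t_0)=0.
\]
Since the trajectory stays in $D$ for $t$ slightly larger than $t_0$, this gives $\eta\ge 0$ just after $t_0$, contradicting the definition of $t_0$.

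If one insists on the hypothesis holding only on $\mathcal{S}$, I would fall back on Nagumo's theorem. At a boundary point $h(x)=0$, the inequality reads $\nabla h(x)\cdot F(x)\ge 0$, so $F(x)$ lies in the tangent cone of $\mathcal{S}$, assuming $\nabla h\neq 0$ on $\partial\mathcal{S}$ so that $\mathcal{S}$ is regular. Nagumo's theorem, together with uniqueness of solutions from the Lipschitz property, then yields forward invariance. I expect this regularity and tangent-cone step to be the delicate point.

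Either route gives $\eta(t)\ge 0$ for all $t\in[0,\tau_{\max})$, i.e.\ $\mathcal{S}$ is forward invariant in the sense of Definition~\ref{def:invar}.
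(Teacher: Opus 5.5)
The paper gives no proof of this theorem; it imports it from \cite{ames2019control}. There the result carries two hypotheses that the version here drops: the conditions are imposed on a domain $D\supseteq\mathcal{S}$, and $\nabla h(x)\neq 0$ is assumed on $\partial\mathcal{S}$. The standard proof goes through Nagumo's theorem, so your second route is essentially the source's argument and your first is the usual comparison alternative.

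You were right that the circularity cannot be broken from the hypotheses as printed; in fact the printed statement is false. Take $n=m=1$, $f\equiv 1$, $g\equiv 1$, $h(x)=-x^2$, so $\mathcal{S}=\{0\}$. At $x=0$ both sides of \eqref{eq:CBF_condition} and \eqref{eq:safety_cond_control} vanish for every $u$. Hence $h$ is a CBF on $\mathcal{S}$ (indeed on all of $\mathbb{R}$, since $L_gh(x)=-2x\neq 0$ for $x\neq 0$), and $\xi\equiv 0$ satisfies \eqref{eq:safety_cond_control} on $\mathcal{S}$. Yet the closed-loop solution $x_t=t$ leaves $\mathcal{S}$ immediately. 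So a strengthening of the kind you propose is unavoidable.

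The same example exposes an imprecision in your first route. What the comparison argument needs on the open set $D$ is the closed-loop inequality $L_fh(x)+L_gh(x)\,\xi(x)\ge-\gamma h(x)$ for the particular controller $\xi$. Saying that ``the CBF property extends to $D$'' only constrains a supremum over $u$ and says nothing about $\xi$ off $\mathcal{S}$. In the example, $h$ is a CBF on all of $\mathbb{R}$, while $\xi\equiv 0$ violates the inequality at every $x>0$.

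With the hypothesis placed on $\xi$ (as in the cited source's formulation, where the controller must satisfy the inequality throughout $D$), your integrating-factor argument is complete and needs no regularity of $\partial\mathcal{S}$. Your Nagumo route, by contrast, works with the inequality only on $\mathcal{S}$. It does, however, need $\nabla h\neq 0$ on $\partial\mathcal{S}$ to guarantee that the tangent cone contains the half-space $\{v:\nabla h(x)^\top v\ge 0\}$. With either hypothesis made explicit, and with your restriction to the maximal interval of existence, the proof is sound.
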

In practice, the condition \eqref{eq:safety_cond_control} is enforced via a quadratic program (QP) that minimally modifies a nominal control input $u_d$ to ensure constraint satisfaction:
$$
\mathop{\min}_{u_t\in\mathbb{R}^m}
\|u_t-u_d\|^2 \ \text{s.t.}\  L_f h(x_t) + L_g  h(x_t) u_t \geq - \gamma \,h(x_t).
$$
This QP yields tractable safety filters and has found a wide range of applications. However, designing valid CBFs for complex systems remains a challenge, and standard CBFs are inherently myopic.

\subsection{Hamilton Jacobi Reachability} \label{sec: CBVF}
Hamilton-Jacobi (HJ) reachability gives a constructive approach to safety and allows explicitly reasoning over future trajectories by computing backward reachable tubes (BRT). Given an unsafe set $\mathcal{X_U}$ defined by a Lipschitz continuous function $l:\mathbb{R}^n\to\mathbb{R}$ as
$\mathcal{X_U} = \{x \in \mathbb{R}^n :\ell(x) \leq 0\}$, HJ reachability characterizes the set of initial states from which the system can avoid $\mathcal{X_U}$ over a finite horizon: $\{x\in\mathbb{R}^n\mid \exists u:\forall t\in[T,0],\,x_t\in\mathcal{X}_U^C\}$. This set is represented implicitly by the value function
\begin{equation} \label{eqn:val_func}
V(x_t) = \sup_{u(\cdot)} \inf_{t \in [T,0]} \ell(x_t)
\end{equation}

\noindent
Similar to CBFs, the zero superlevel set of $V(x_t)$, $\{x\in\mathbb{R}^n:V(x_t)\geq 0\}$  defines the set of states that will always remain safe over the horizon $[T,0]$.

A challenge of HJ reachability is that controllers derived from the solution do not permit a decrease in the safety metric $V$, leading to conservativeness that is typically addressed with least restrictive safety filtering in which the safety filter is only activated near the boundary \cite{kim2025reachabilitybarriernetworkslearning}. Recently, works address this challenge through Control Barrier Value Functions (CBVF) which allow a decrease in $V$ far from the boundary akin to CBFs \cite{CBVF}. Define the CBVF as:
\begin{equation} \label{eqn:cbvf}
\begin{aligned}
V_{\gamma}(x_t)  = \sup_{u_t} \inf_{t \in [T,0]} e^{\gamma(t-T)}\ell(x_t)
\end{aligned}
\end{equation}
The solution to \eqref{eqn:cbvf} is governed by the variational inequality (VI):
$
    \min \{ \ell(x_t) - V_{\gamma} (x_t), ~
    \nabla_t V_{\gamma} (x_t) + H(x_t) + \gamma V_{\gamma}(x_t) \}=0, ~
    V_{\gamma}(x_0) = \ell(x)
$
 with the Hamiltonian $H(x_t)=
\sup_{u_t} \nabla_x V_{\gamma}(x_t)\cdot (f(x_t)+g(x_t)u_t).
$

Then, a minimally invasive safety filter can  be formulated as in \cite{CBVF}:
\begin{flalign}\label{eq:costCBFQP}
&\mathbf{u}_{\rm QP}(x_t) = \argmin_{\mathbf{u_t} \in \mathbb{R}^{m} } \|u_t-u_d\|^2 \  ,
\\
 &\text{s.t.} ~\nabla_t V_{\gamma} (x_t) + L_f V_{\gamma} (x_t) + L_g V_{\gamma}(x_t) \cdot u_t  \ge -\gamma V_{\gamma}(x_t) \nonumber
\end{flalign}

\noindent Unlike CBFs, HJ reachability naturally encodes a horizon and therefore mitigates myopic behavior. However, computing 
$V$ requires solving a partial differential equation whose complexity scales exponentially with the state dimension,  limiting applicability to low-dimensional systems.

\subsection{Learned HJ Reachability}
To address the curse of dimensionality associated with HJ solvers, recent work has proposed learning-based approximations of the value function  $V$ via neural networks \cite{deepreach}. In this work we leverage DeepReach \cite{deepreach}, which learns the solution to the HJB-VI by minimizing the residual: $\mathbb{E} [\Vert \ell(x) - V_{\theta} (x, t), ~ \nabla_t V_{\theta} (x, t) + H(x, t) + \gamma V_{\theta}(x,t) \Vert]$
in a self-supervised manner. Specifically, DeepReach yields a neural network, $\Pi_{\theta}$, where the learned value function is given by $V_{\theta}(x,t) = \ell(x) + (T-t)\Pi_{\theta}(x,t)$. Curriculum training slowly propagates time backwards as training advances to optimize the network for the residual  evaluated at uniformly sampled states.

DeepReach improves computational tractability and enables scaling to higher-dimensional systems. However, it remains limited by data inefficiency: accurately approximating reachable-set boundaries grows with system complexity, and DeepReach faces particular challenge for high dimensional systems with complex dynamics and constraints \cite{feng2025}.

\section{Boundary Sampling via the\\Pontryagin Maximum Principle (PMP)} \label{sec:approach}

Boundary trajectories that barely avoid constraint violation are informative, but rare under typically-used uniform sampling, motivating a principled approach for  generating boundary-focused samples. 
Prior work \cite{feng2025} has guided learning using rollouts from model predictive control (MPC) to obtain semi-supervised labels. This approach is effective but requires solving optimization problems, increasing  computational  complexity.

In the following, we leverage the PMP to characterize such boundary-defining trajectories and exploit this structure to guide data collection for learned reachability.

\subsection{PMP Characterization of Boundary Trajectories}

\subsubsection{Minimum time hitting problem}
To characterize trajectories that define the reachable set boundary, given a fixed initial state $\bar{x}_0$ inside the safe set $\mathcal{S}$, 
we first formulate the  free-final-time optimal control problem (\textbf{OCP}):
\begin{align}
\textbf{OCP}:
\begin{cases}
\inf\limits_{(u,T)} \ 
  &T
\\[1mm]
\ \textrm{s.t.} \  
&\dot{x}_t=f(x_t)+g(x_t)u_t,
\ t\in[0,T]
\\
&x_0=\bar{x}_0,
\\
&x_T\in\partial\mathcal{S}.
\end{cases}
\nonumber
\\[-6mm]
\label{OCP}
\end{align}
Trajectories solving this problem reach the safe set boundary $\partial\mathcal{S}$ as quickly as possible. 
\textbf{OCP} is well-posed if $\partial\mathcal{S}$ is reachable from $\bar{x}_0$.
Next, we characterize optimal solutions to \textbf{OCP} using the PMP.

\subsubsection{Necessary optimality conditions from the PMP}
The PMP provides necessary optimality conditions  for \textbf{OCP}.
As the Hamiltonian of \textbf{OCP} is
$$H
(x, u, p) =
p^\top(f(x)+g(x)u),
$$
for any optimal solution $(x, u)$ to \textbf{OCP}, there exists an absolutely-continuous function $p:[0,T]\to\mathbb{R}$, called the adjoint vector, and a real number $p^0\in\{-1,0\}$, such that the nontrivality condition $(p,p^0)\neq 0$ holds, and the following optimality conditions (\textbf{OC}) hold:
\begin{align} 
\label{eq:OC_optimality_conditions}
\textbf{OC}:  
\begin{cases}
\dot{x}_t=f(x_t)+g(x_t)u_t,
\\
\dot{p}_t=-(\nabla f(x_t)+\nabla g(x_t)u_t)^\top p_t,
\\[0mm]
u_t=\mathop{\arg\max}\limits_{v\in\mathcal{U}}p_t^\top g(x_t) v,
\\
x_T\in\partial\mathcal{S}
\\
p_T\perp T_{x_T}\partial\mathcal{S},
\\
p_T	^\top(f(x_T)+g(x_T)u_T)=-p^0.
\end{cases}
\end{align}
Such a tuple $(x,p,p^0,u)$ is called an \textit{extremal}:
\begin{itemize}
\item 
If $p^0=-1$,  the extremal is called \textit{normal}. 
\item If $p^0=0$,  the extremal is called \textit{abnormal}. 
\end{itemize}In most applications, the extremal is normal, which (informally) indicates that the cost plays a role in the problem. However, we show next that the extremals of interest are actually  those that are abnormal.
\subsubsection{Abnormal extremals are those that barely stay in $\mathcal{C}$}
The following result connects abnormal extremals of the minimum-time problem \textbf{OCP} and trajectories that barely avoid constraint violation.

\begin{lemma}[Barely-staying extremals are abnormal]
\label{lem:barely_staying}
Let $(x,p,p^0,u)$  be an extremal for \textbf{OCP}, i.e., $(x,p,p^0,u)$ satisfies the optimality conditions \textbf{OC}. 
Assume that 
\begin{itemize}[leftmargin=3mm]
\item \textbf{A1} (Safe set dimension) $T_{x_T}\partial\mathcal{S}$ is of dimension $n-1$.
\item  \textbf{A2} (Solution is safe) $x_t\in\mathcal{S}$ for all $t\in[0,T]$.
\item  \textbf{A3} (Solution can barely remain in $\mathcal{S}$)
For some $\epsilon>0$, 
the control $u$ can be smoothly extended   over $(T,T+\epsilon)$ 
such that  $x_t\in\mathcal{S}$ for all $t\in[0,T+\epsilon)$. 

\end{itemize} 
Then,
$p^0=0$.
\end{lemma}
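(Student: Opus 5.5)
We use the transversality conditions in \textbf{OC}, together with the fact that a trajectory which barely stays in $\mathcal{S}$ must be tangent to $\partial\mathcal{S}$ at $x_T$.

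\emph{Step 1: the endpoint velocity is tangent to $\partial\mathcal{S}$.} Define $\phi(t)=h(x_t)$ on $[0,T+\epsilon)$, where $x$ is the trajectory produced by the smooth extension of $u$ in \textbf{A3}. We will assume, as is implicit in the statement, that $u$ is continuous from the left at $T$ and that the extension matches it, so that $\dot x$ is continuous at $T$ with value $\dot x_T=f(x_T)+g(x_T)u_T$. Then:
\begin{itemize}
\item by \textbf{A2} and \textbf{A3}, $\phi(t)\geq 0$ on $[0,T+\epsilon)$;
\item since $x_T\in\partial\mathcal{S}$ and $h$ is continuous, $\phi(T)=0$.
\end{itemize}
So $T$ is an interior minimum of the $C^1$ function $\phi$, and hence $\phi'(T)=\nabla h(x_T)^\top \dot x_T=0$.

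\emph{Step 2: identify $T_{x_T}\partial\mathcal{S}$.} By \textbf{A1}, the tangent space has dimension $n-1$. We take the natural reading that $\nabla h(x_T)\neq 0$ and $T_{x_T}\partial\mathcal{S}=\{v:\nabla h(x_T)^\top v=0\}=\nabla h(x_T)^\perp$; this is the regularity role of \textbf{A1}. Step 1 then gives $\dot x_T\in T_{x_T}\partial\mathcal{S}$.

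If one prefers not to assume $\nabla h(x_T)\neq0$, we can argue directly with curves instead. The curve $t\mapsto x_t$ stays in $\mathcal{S}$ on both sides of $T$ and touches the boundary at $T$, so $\dot x_T$ lies in the tangent cone of $\mathcal{S}$ at $x_T$ and also in its negative. Under \textbf{A1} this two-sided cone is exactly the hyperplane $T_{x_T}\partial\mathcal{S}$. This identification is the main delicate point, and I would settle it through the $\nabla h$ formulation.

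\emph{Step 3: conclude.} The transversality condition $p_T\perp T_{x_T}\partial\mathcal{S}$ combined with Step 2 gives $p_T^\top \dot x_T=0$. The last line of \textbf{OC} says $p_T^\top(f(x_T)+g(x_T)u_T)=-p^0$. Therefore $-p^0=0$, i.e. $p^0=0$, and the extremal is abnormal.

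The main obstacle is the regularity used in Step 1: \textbf{OC} only gives $u$ measurable, so $\dot x_T$ needs continuity of $u$ at $T$ (or a Lebesgue-point argument) to be well defined. I would state that the smooth extension in \textbf{A3} is taken to agree continuously with $u$ at $T$. If that is not available, one can instead use one-sided difference quotients: the left one gives $\phi'(T^-)\le0$, the right one gives $\phi'(T^+)\ge0$, and these match when $\dot x$ is continuous at $T$.
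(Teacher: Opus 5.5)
Your proof is correct. It rests on the same fact as the paper's: a barely-safe trajectory meets $\partial\mathcal{S}$ tangentially, so $p_T\perp T_{x_T}\partial\mathcal{S}$ together with $p_T^\top\dot x_T=-p^0$ forces $p^0=0$. You establish the tangency by a different mechanism, though. The paper argues by contradiction. If $p^0=-1$, then $\dot x_T$ is transversal and, by \textbf{A2}, points out of $\mathcal{S}$. The paper then places a Euclidean ball $B$ with $\mathrm{Int}(B)\cap\mathcal{S}=\emptyset$ tangent to $\partial\mathcal{S}$ at $x_T$, and cites Lemma~4.6 of the convex-hull-estimation reference to conclude that every smooth extension enters $\mathrm{Int}(B)$, contradicting \textbf{A3}. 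You instead read tangency directly off a first-order condition: $t\mapsto h(x_t)$ has an interior minimum at $T$, or equivalently both $\pm\dot x_T$ lie in the tangent cone of $\mathcal{S}$ at $x_T$.

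Your route is more elementary in two ways. It needs neither the external lemma nor the exterior tangent ball. The ball's existence is an exterior-sphere condition that \textbf{A1} alone does not provide; it needs, e.g., a $C^{1,1}$ boundary near $x_T$. Your route also isolates exactly where the smoothness in \textbf{A3} enters, namely continuity of $\dot x$ at $T$. This matches the paper's Remark, and its proof likewise reads \textbf{A3} as a smooth $\tilde u$ on $[0,T+\epsilon]$ with $\tilde u=u$ on $[0,T]$, so your continuity assumption is licensed. The paper's "no smooth extension stays safe" formulation looks stronger, but your argument yields the same conclusion by contraposition.

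Two adjustments. First, your preferred Step~2 assumes $\nabla h(x_T)\neq 0$. That is \textbf{A4} of Corollary~1, not a hypothesis of the lemma, and \textbf{A1} does not imply it. For $h(x)=x_1^3$, the boundary $\{x_1=0\}$ has an $(n-1)$-dimensional tangent space but $\nabla h=0$ on it, so your Step~1 degenerates to $0=0$.

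For the lemma as stated, lead with your tangent-cone argument, which needs only that $\partial\mathcal{S}$ is a $C^1$ hypersurface near $x_T$ (what \textbf{A1} presupposes). In that case $\mathcal{S}$ locally lies in the closure of one side, so its tangent cone $C$ sits in a closed half-space bounded by $T_{x_T}\partial\mathcal{S}$. Then the inclusion $C\cap(-C)\subseteq T_{x_T}\partial\mathcal{S}$ is all you need. The $\nabla h$ version remains perfectly adequate for Corollary~1, where \textbf{A4} is in force.

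Second, drop the one-sided difference-quotient fallback. It still requires the one-sided derivatives to agree, and by the paper's Remark the continuity at $T$ cannot be dispensed with, so just state it as part of \textbf{A3}.
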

\noindent 
This result states that trajectories that can barely stay inside $\mathcal{S}$ are projections of \textit{abnormal} extremals. 
Assumption \textbf{A1} states that the tangent space of the safe set boundary is of dimension $n-1$ (e.g., only one of the state variables is constrained to be within two bounds: $\mathcal{S}=\{(x_1,x_2)\in\mathbb{R}^2:\underline{x}\leq x_1\leq \overline{x}\}$). Assumptions \textbf{A2} and \textbf{A3} state that the state trajectory is in the safe set $\mathcal{S}$ over $[0,T]$ and can later remain in the safe set $\mathcal{S}$  after touching the safe set boundary $\partial\mathcal{S}$ at time $T$ for some smooth extension of the control $u$. 
This result follows from observing that  barely-safe trajectories must hit $\partial\mathcal{S}$ with a velocity that is tangent to $\partial\mathcal{S}$ at time $T$ (Figure \ref{fig:extremal}), otherwise it would always exit $\mathcal{S}$.

\begin{remark}
Assuming that the control extension is smooth for $t\geq T$ is necessary. For example, the system $\dot{x}_t=u$ has sufficient control authority to enable hitting $\partial\mathcal{S}$ with non-tangent velocity and remain in $\mathcal{S}$ for $t>T$: Dropping the smoothness requirement in \textbf{A3} is insufficient to conclude $p^0=0$. See also Section \ref{sec:conc}.

\end{remark}
\begin{proof}
We prove Lemma \ref{lem:barely_staying} by contradiction.
Assume that the extremal is normal, so $p^0=-1\neq 0$. Then, $
p_T^\top \dot{x}_T\neq 0$. 
In particular, $p_T\neq0$ and $\dot{x}_T\neq 0$.

\noindent
\begin{minipage}[t]{0.66\linewidth}
\hspace{6pt} From \textbf{OC},  $p_T$ is perpendicular to $T_{x_T}\partial\mathcal{S}$, so  that $\dot{x}_T\notin T_{x_T}\partial\mathcal{S}$ (otherwise, we would have $p_T^\top\dot{x}_T=0$). Then, $\dot{x}_T$ points outside $\mathcal{S}$ since $x_t\in\mathcal{S}$ for  $t\in[0,T]$ (\textbf{A2}). 
By \textbf{A1}, $T_{x_T}\partial\mathcal{S}$ is of dimension $n-1$, 
so there is a Euclidean ball $B$ so that $\text{Int}(B)\cap\mathcal{S}=\emptyset$, $x_T\in\partial B$, and  $T_{x_T}\partial B=T_{x_T}\partial\mathcal{S}$.
\end{minipage}
\hfill
\begin{minipage}[t]{0.32\linewidth}
    \vspace{0pt} 
    \centering
    \includegraphics[width=0.8\linewidth]{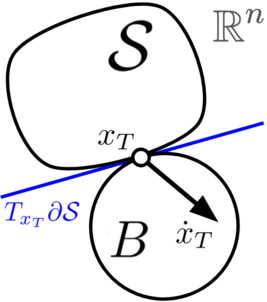}
    \captionof{figure}{Tangent ball.}
\end{minipage}

Then, by \textbf{A3}, and using  \cite[Lemma 4.6]{LewEstimatingCH2024}, no matter the choice of smooth control extension $\tilde{u}:[0,T+\epsilon]\to\mathcal{U}$ with $\tilde{u}=u$ on $[0,T]$, we have $x_{T+\epsilon}\in\text{Int}(B)$ so $x_{T+\epsilon}\notin\mathcal{S}$, so the system will always exit $\mathcal{C}$. 
This is a contradiction. 
\end{proof}  
Lemma \ref{lem:barely_staying} leads to the following result which gives conditions that barely-safe trajectories must satisfy.

\begin{corollary}
\label{cor:barely}(Characterization of barely-staying trajectories).
Let $(x,p,p^0,u)$  be an extremal for \textbf{OCP}, satisfying   \textbf{OC}. 
Assume that \textbf{A1-A3} hold and that 
\begin{itemize}[leftmargin=3mm]
\item  \textbf{A4} ($\mathcal{S}$ is a superlevel set) $\mathcal{S}=\{x\in \mathbb{R}^n:h(x)\geq 0\}$ where $h:\mathbb{R}^n\to\mathbb{R}$ is a continuously differentiable submersion at $x_T$ (i.e., $\nabla h(x_T)$ has full rank one).
\end{itemize} 
Then, $p^0=0$, so  $(x,p,p^0,u)$ satisfies
\begin{align} 
\label{eq:PMPODE}
\textbf{OC}:  
\begin{cases}
\dot{x}_t=f(x_t)+g(x_t)u_t,
\\
\dot{p}_t=-(\nabla f(x_t)+\nabla g(x_t)u_t)^\top p_t,
\\[0mm]
u_t=\mathop{\arg\max}\limits_{v\in\mathcal{U}}p_t^\top g(x_t) v,
\\
h(x_T)=0,
\\
p_T=\nabla h(x_T),
\\
p_T	^\top(f(x_T)+g(x_T)u_T)=0.
\end{cases}
\end{align}
\end{corollary}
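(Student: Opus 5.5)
The plan is to derive the corollary from Lemma~\ref{lem:barely_staying}, using \textbf{A4} only to rewrite the abstract boundary conditions of \textbf{OC} concretely. Since \textbf{A1}--\textbf{A3} hold, Lemma~\ref{lem:barely_staying} gives $p^0=0$ at once. Substituting this into the transversality condition $p_T^\top(f(x_T)+g(x_T)u_T)=-p^0$ yields $p_T^\top(f(x_T)+g(x_T)u_T)=0$, which is the last line of \eqref{eq:PMPODE}. The state equation, the adjoint equation and the maximization condition carry over unchanged from \eqref{eq:OC_optimality_conditions}.

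For the boundary conditions I will use \textbf{A4}. Since $h$ is continuous and $x_T\in\partial\mathcal{S}$, we have $h(x_T)=0$. Indeed, if $h(x_T)>0$, a neighborhood of $x_T$ would lie in $\mathcal{S}$. If $h(x_T)<0$, then $x_T\notin\mathcal{S}$, but $\mathcal{S}$ is closed, so $\partial\mathcal{S}\subset\mathcal{S}$ and this is impossible. Because $\nabla h(x_T)\neq 0$, the regular value / implicit function theorem shows that, near $x_T$, the boundary $\partial\mathcal{S}$ coincides with the level set $\{h=0\}$. This uses the submersion property, so $h$ takes both signs near $x_T$. That level set is a $C^1$ hypersurface with $T_{x_T}\partial\mathcal{S}=\ker \nabla h(x_T)^\top$, which is consistent with \textbf{A1}. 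The condition $p_T\perp T_{x_T}\partial\mathcal{S}$ then forces $p_T=\lambda\nabla h(x_T)$ for some scalar $\lambda$.

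The main obstacle is getting $p_T=\nabla h(x_T)$ exactly rather than a multiple of it. First I need $\lambda\neq 0$. With $p^0=0$, nontriviality $(p,p^0)\neq 0$ gives $p\not\equiv 0$. The adjoint equation is linear in $p$, so $p_T=0$ would imply $p\equiv 0$ by uniqueness of solutions. Hence $p_T\neq 0$ and $\lambda\neq 0$.

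Next I use the homogeneity of \textbf{OC} when $p^0=0$. Replacing $p$ by $p/\lambda$ preserves the adjoint ODE and the orthogonality and zero-Hamiltonian conditions. For $\lambda>0$ it also preserves the $\arg\max$. So I normalize the extremal to obtain $p_T=\nabla h(x_T)$, stating that the adjoint is defined up to positive scaling.

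The sign is the delicate point. A tangential touch with inward-staying motion suggests $\lambda>0$, with $p_T$ pointing into $\mathcal{S}$. If the sign cannot be fixed from \textbf{A2}--\textbf{A3}, I would record the conclusion as $p_T=\pm\nabla h(x_T)$ and note that the choice of sign corresponds to the choice of the maximizing direction.
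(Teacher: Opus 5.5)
Your argument takes the same route as the paper, whose proof is one line. Lemma~\ref{lem:barely_staying} gives $p^0=0$, and under \textbf{A4} the tangent space is $T_{x_T}\partial\mathcal{S}=\{y:\nabla h(x_T)^\top y=0\}$, so transversality makes $p_T$ parallel to $\nabla h(x_T)$. The paper then writes $p_T=\nabla h(x_T)$ without comment. Your extra steps are correct and supply what the paper leaves implicit: $h(x_T)=0$ by closedness of $\mathcal{S}$, $p_T\neq 0$ from nontriviality plus uniqueness for the linear adjoint equation, and invariance of \textbf{OC} under positive rescaling of $p$ when $p^0=0$.

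The sign, however, cannot be fixed from \textbf{A1}--\textbf{A4}. Your heuristic that an inward-staying tangential touch suggests $\lambda>0$ points the wrong way. Here is a counterexample.
\begin{itemize}
\item Take $\dot x_1=u$, $\dot x_2=1$, $\mathcal{U}=[-1,1]$, $h(x)=x_2-x_1+(x_2-1)^2$, and $\bar x_0=(0,0)$.
\item Every admissible control gives $x_1(t)\le t=x_2(t)$, so $h(x_t)\ge (t-1)^2$. Hence $u\equiv 1$, $x_t=(t,t)$, $T=1$ is the unique minimum-time solution of \textbf{OCP}.
\item It satisfies \textbf{A2} and \textbf{A3}: keep $u\equiv 1$, and then $h(x_t)=(t-1)^2\ge 0$. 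Also $\nabla h(x_T)=(-1,1)\neq 0$.
\item Here $p$ is constant, and transversality forces $p\equiv\mu(-1,1)$. The control $u\equiv 1$ maximizes $p^\top g v=-\mu v$ only if $\mu<0$.
\end{itemize}
So every adjoint for this extremal is a negative multiple of $\nabla h(x_T)$, and no rescaling compatible with \textbf{OC} yields $p_T=\nabla h(x_T)$.

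This is in fact the natural sign for a minimum-time problem. Normal extremals have $p_T^\top\dot x_T=1$ with $\dot x_T$ exiting $\mathcal{S}$, which forces $p_T$ to point outward. Your fallback $p_T=\pm\nabla h(x_T)$, equivalently $p_T=\lambda\nabla h(x_T)$ with $\lambda\neq 0$, is therefore the strongest correct conclusion. The $+$ sign adopted in the corollary and in the sampling algorithm selects the barely-safe extremals whose terminal control maximizes $\dot h$, which are the ones relevant to the backward reachable tube boundary. It is a choice, not a consequence of Lemma~\ref{lem:barely_staying}.
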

\noindent The result follows from Lemma \ref{lem:barely_staying} and classical results  \cite[Chapter 5]{Lee2012}, since the tangent space to the boundary $\partial \mathcal{S}$ at a point $x_T \in \partial \mathcal{S}$ is given by
$
T_{x_T}\partial\mathcal{S}=\{y\in\mathbb{R}^n:\nabla h(x_T)^\top y=0\} 
$ under \textbf{A4}. 

While the PMP has been used previously to study the structure of 
forward reachable sets \cite{Kurzhanski2000,Natherson2025,lew2024convexhullsreachablesets} and time-optimal trajectories \cite{Wang2026,Boscain2005}, Corollary \ref{cor:barely} enables its use as a principled mechanism for generating boundary-focused training data.

\subsection{PMP-Informed Boundary Sampling Algorithm}Corollary~\ref{cor:barely} describes trajectories that barely remain safe, which therefore provide informative samples to learn minimally-invasive predictive safety filters. We now leverage this result to generate boundary samples.

Assume that the safe set $\mathcal{S}$ is defined as $\mathcal{S} = \{x \in \mathbb{R}^n : h(x) \ge 0\}$, where $h$ is a submersion (\textbf{A4}).  
Further, assume that the admissible control set $\mathcal{U}$ is compact and strictly convex, and that $g(x)$ is invertible. Under these conditions, the PMP optimality conditions  admit a unique closed-form solution for the control input \cite{lew2024convexhullsreachablesets}
\begin{equation}
    \label{eq:control:closed_form}
\mathop{\arg\max}\limits_{u\in\mathcal{U}} \, v^\top u
= \varphi^{\mathcal{U}}(v),
\end{equation}
where $v=g(x)p$ denotes the costate-dependent direction. 
Based on these observations, boundary-defining trajectories can be generated using the following procedure:
\begin{enumerate}[leftmargin=5mm]
    \item \textbf{Initialization}: Randomly sample initial states from the safe set $\mathcal{S}$.
    \item \textbf{Boundary point identification}: 
    For each initialization, compute terminal points $(x_T, p_T)$ on the safe set boundary via gradient descent that satisfy the PMP terminal conditions
    \begin{equation}
    \label{eq:pmp_terminal}
    \begin{aligned}
        h(x_T) &= 0, \\
        p_T -\nabla h(x_T)&= 0, \\
        p_T^\top \big(f(x_T) + g(x_T) u_T \big) &= 0,
    \end{aligned}
    \end{equation}
    where the control input $u_T$ is obtained from the closed-form PMP maximizer in \eqref{eq:control:closed_form}.
    \item \textbf{Backward integration}: From each $(x_T,p_T)$, integrate the PMP state-costate dynamics \eqref{eq:PMPODE} \textit{backward in time}. 
\end{enumerate}
The resulting state trajectories evolve along the boundary of the backward reachable tube and provide dense coverage of safety-critical regions of the state space. Samples collected along these trajectories can then augment the training dataset of DeepReach, concentrating learning capacity near boundary-defining states and improving data efficiency relative to uniform sampling.

In practice, we use the library \texttt{ChReach}  \cite{lew2024convexhullsreachablesets} to generate these trajectories, as it implements the closed-form maximizer $\varphi^{\mathcal{U}}$ in \eqref{eq:control:closed_form}  for many sets $\mathcal{U}$ and under relaxed assumptions (e.g., non-invertible $g$ and box  $\mathcal{U}$).

\section{Application} \label{sec:app}

To demonstrate the proposed approach, we consider the problem of shielding a driver (or controller) operating a vehicle at the limits of handling in a racing scenario. This setting exhibits highly nonlinear dynamics and requires proactive behavior to ensure that the vehicle enters turns at an appropriate speed while remaining within track boundaries.

The track is represented as an oval course, depicted by the dashed lines in Fig.~\ref{fig: learned_set}, consisting of $20~\mathrm{m}$ straight segments and turns with a radius of $12~\mathrm{m}$. Safety is defined as remaining within the road edges, with the unsafe set of \eqref{eqn:val_func} defined by the function $\ell(x): = 3 - \vert e \vert$ where $e$ denotes the lateral deviation from the track centerline. 

\vspace{-1em}
\subsection{Vehicle Dynamics}
To generate training data and construct the safety filter, we use a 3 degree-of-freedom single track model which describes vehicle behavior even in extreme situations such as racing\cite{dallas2023hierarchical}.
We represent the state as ${x=[s\ e\ \Delta\phi\  V\ r\ \beta\ \delta\ \tau\,]^\top}$ where $x$ contains the path progress, lateral error, course error, velocity, yaw rate, sideslip, roadwheel angle, and total torque respectively. To promote smoothness, the input is represented as ${u=[\, \dot\delta\ \dot\tau\,]^\top}$ where $u$ contains the roadwheel angle rate and total torque rate. State constraints on $[\delta , ~\tau ]$ are handled as in Appendix \ref{appendix}. The system's dynamics are given as:

{\small%
\begin{flalign}\label{eq:bike}
    f(x) &= 
    {\renewcommand{\arraystretch}{1.5}%
    \begin{bmatrix}
    \frac{V\cos(\Delta\phi)}{1-\kappa e}%
    \\%
    V\sin(\Delta\phi)%
    \\%
    \dot{\beta} + r -\frac{\kappa_{\rm ref} V\cos(\Delta\phi)}{1-\kappa_{\rm ref}e}%
    \\%
     \frac{F_{\rm xf}\cos(\delta - \beta) - F_{\rm yf}\sin(\delta - \beta) + F_{\rm xr}\cos\beta + F_{\rm yr}\sin\beta}{m}%
     \\%
    \frac{a ( F_{\rm xf}\sin\delta + F_{\rm yf}\cos\delta ) - b F_{\rm yr}}{I_z}%
    \\%
    \frac{F_{\rm xf}\sin(\delta - \beta) + F_{\rm yf}\cos(\delta - \beta) -F_{\rm xr}\sin\beta + F_{\rm yr}\cos\beta}{mV}-r%
    \\%
    0%
    \\%
    0%
    \end{bmatrix},\nonumber 
    }
\\
    g(x) &= 
    \begin{bmatrix}
    0 & 0 &0 & 0 & 0 & 1 & 0
    \\
    0 & 0 &0 & 0 & 0 & 0 & 1 
    \\
    \end{bmatrix}^{\top},
\end{flalign}
}

\noindent where $\kappa_{\rm ref}$ is the reference curvature of the track, $a$ and $b$ are the distances between the center of mass and front and rear axles, respectively, $m$ is the vehicle mass, and $I_z$ is the vehicle's moment of inertia. Tire forces are obtained from a derated brush tire model \cite{Pacejka_2005} which maps the tire slip angle to lateral forces while accounting for tire saturation.  The maximum lateral force is obtained by derating the frictional limits by the longitudinal force to account for force coupling, $F_{y, {\rm max}} = \sqrt{(\mu F_z)^2-\zeta F_x^2}$. Where $\mu F_z$ is the maximal frictional force, and ${\zeta = 0.99}$ promotes numerical stability. Additionally,  the state is extended with a virtual state with zero dynamics, representing the class-$\mathcal{K}$ parameter such that $\gamma$ can be tuned for experiments to yield the desired behavior \cite{kim2025reachabilitybarriernetworkslearning}.

\begin{figure}[t]
    \vspace{0.6em}
    \includegraphics[width=0.6\columnwidth]{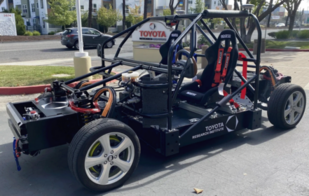}
    \centering
    \caption{Experimental platform.}
    \vspace{-1.5em}
    \label{fig:GRIP}
\end{figure}

\vspace{-1em}
\subsection{Experimental Vehicle} \label{sec: exp_veh}
The experimental vehicle (Fig.~\ref{fig:GRIP}) is a custom drive-by-wire vehicle with a steering range of ${0.71\,\mathrm{rad}}$ and Elaphe IWM M700 VD4 motors at each wheel. A  dual antenna Oxford Technical Systems (OxTS) RT3003 RTK-GPS/IMU system provides meaurements at 200 Hz and on-board computations are performed on a  RAVE ATC8110-F with an Intel Xeon E-2278GE processor. 
All experiments are conducted on a large skidpad and were performed on a closed course. 

\vspace{-0.5em}
\section{Results} \label{sec:res}
We now demonstrate that PMP-based sampling improves safety filtering performance, sample efficiency, and convergence in a high-dimensional system. Specifically,
Sec.~\ref{sec:fr} and~\ref{sec:recon} show how leveraging PMP improves learning quality, while Section~\ref{sec: exp} demonstrates proactive and safe control in a shared control racing application.

\vspace{-0.9em}
\subsection{Experimental and Proxy Network Architecture} \label{sec: Net_Arch}
\textbf{Proxy Ground-Truth Architecture} — Since the exact characterization of the
safe set is intractable for this system, we leverage the DeepReach framework to
learn a high-fidelity proxy model to serve as ground-truth. This proxy model has higher capacity, larger training
sets, and more training epochs than the experimental models detailed below.
Specifically, the proxy network consists of three hidden layers of 512 neurons and is trained for $300k$ total epochs ($20k$ pretraining, $260k$
curriculum, and $20k$ finetuning) using $110k$ samples and a cosine annealing
learning rate scheduler.

\textbf{Experimental Architecture} — To evaluate the effects of the
proposed PMP sampling strategy, we leverage the DeepReach framework to train
several experimental networks, varying capacity, sample size, horizon, and the number of training
epochs to estimate the safe set. Each network is fully-connected, consisting of
two or three hidden layers with 512 neurons each. $40\%$ of samples are drawn from the safe set interior and $60\%$ are generated with the PMP boundary sampling scheme or uniform boundary sampling as a baseline. Boundary samples are perturbed by adding uniformly sampled noise of $\pm 10 cm$ to the boundary lateral position, producing both safe and unsafe states in the vicinity of the boundary. For all comparison configurations
(e.g., sampling strategy and epochs), we train four models with
differing seeds and assess performance using a one-sided T-test. 

\begin{figure*}[t]
\centering

\begin{subfigure}[b]{0.32\textwidth}
    \centering
    \includegraphics[width=\textwidth]{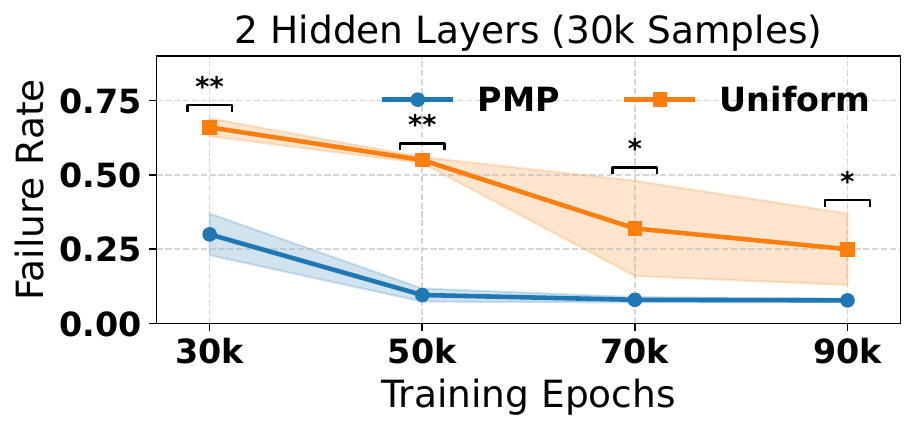}
    \caption{2L, 30k samples}
    \label{fig:2layer_epochs_sub}
\end{subfigure}
\hfill
\begin{subfigure}[b]{0.32\textwidth}
    \centering
    \includegraphics[width=\textwidth]{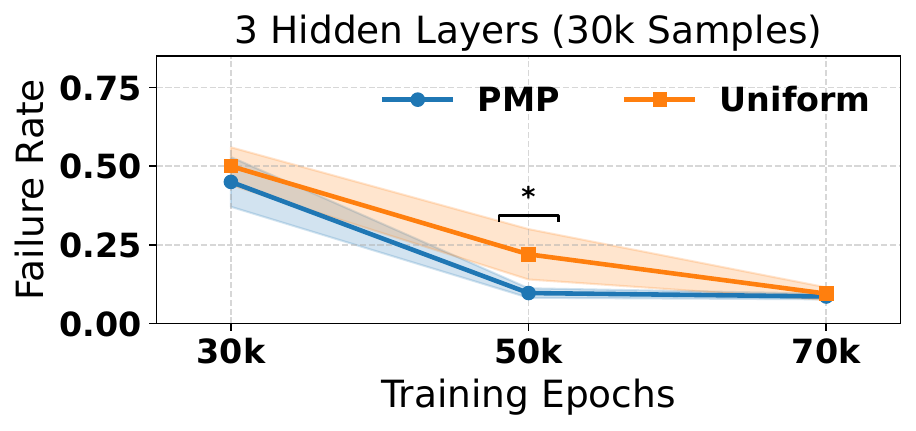}
    \caption{3L, 30k samples}
    \label{fig:3layer_epochs_sub}
\end{subfigure}
\hfill
\begin{subfigure}[b]{0.32\textwidth}
    \centering
    \includegraphics[width=\textwidth]{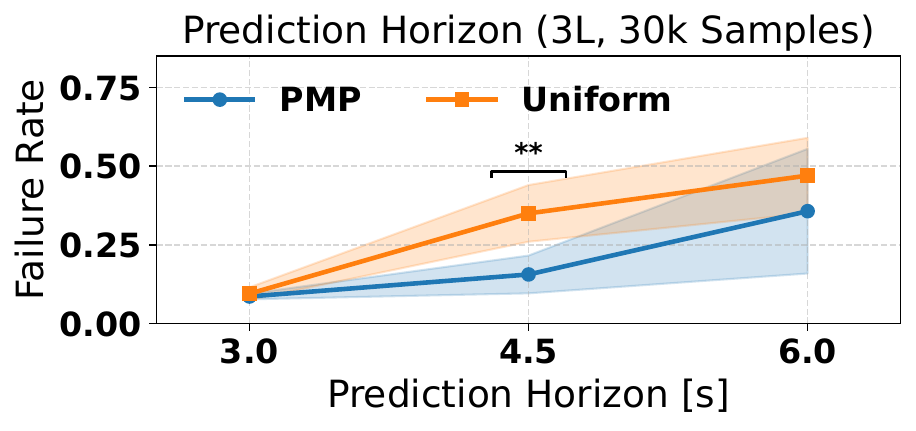}
    \caption{Prediction horizon}
    \label{fig:horizon_sub}
\end{subfigure}

\vspace{0.5em}

\begin{subfigure}[b]{0.48\textwidth}
    \centering
    \includegraphics[width=0.9\textwidth]{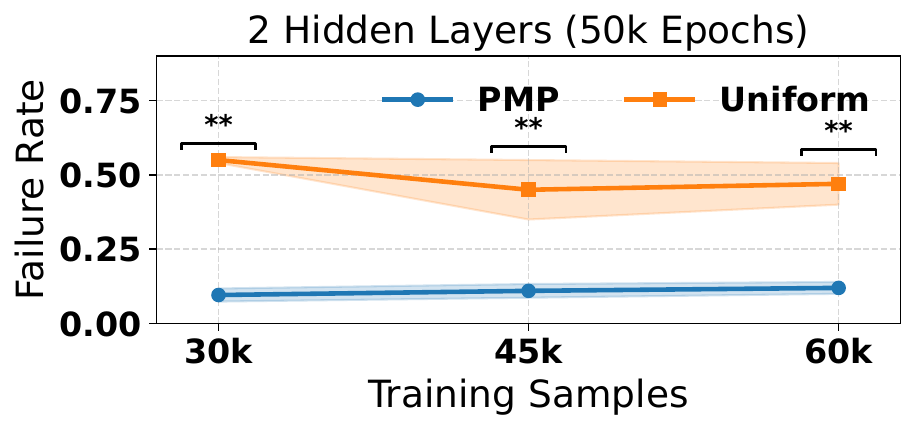}
    \caption{2L dataset size (50k epochs)}
    \label{fig:2layer_dataset_sub}
\end{subfigure}
\hfill
\begin{subfigure}[b]{0.48\textwidth}
    \centering
    \includegraphics[width=0.9\textwidth]{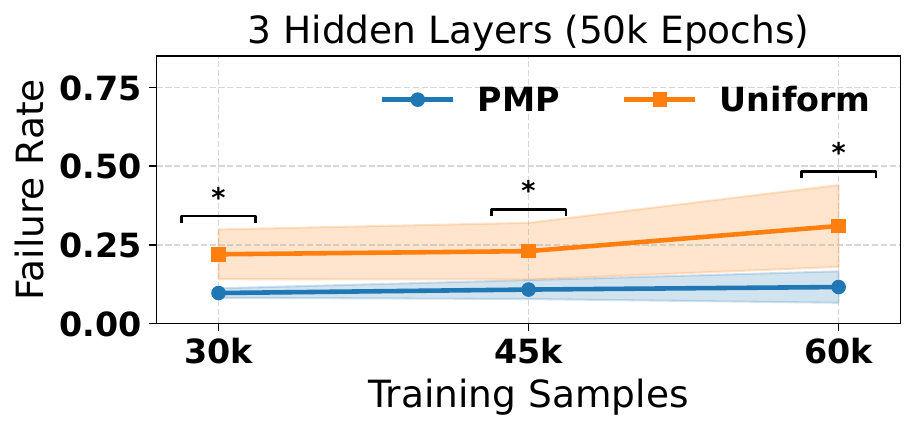}
    \caption{3L dataset size (50k epochs)}
    \label{fig:3layer_dataset_sub}
\end{subfigure}

\caption{Failure rate between PMP and uniform sampling. Performance vs training epochs and prediction horizon (top). Performance vs dataset size (bottom). PMP consistently improves failure rate. Shaded regions denote $\pm 1$ S.D. Significance shown with $^{*}p<0.05$ and $^{**}p<0.01$.}
\vspace{-1.5em}
\label{fig:main_results}
\end{figure*}

\vspace{-1em}
\subsection{PMP Sampling Improves Failure Rate} \label{sec:fr}
\noindent\textit{\textbf{Training Epochs}} --
Fig.~\ref{fig:2layer_epochs_sub} and \ref{fig:3layer_epochs_sub} depict failure rates versus various final training epochs for two- and three-layer networks. Each final epoch represents the model trained to completion. PMP sampling reduces failure rates, particularly for limited compute. At a terminal epoch of 30k, failure rate statistically significantly decreases from 0.66 to 0.3 (2-layer) and from 0.50 to 0.45 (3-layer). As we train for longer the gap narrows (achieving 0.08 failure rate). These results indicate that PMP sampling improves learning efficiency, enabling faster convergence to safer solutions.

\noindent\textbf{\textit{Dataset Size}} --
Fig.~\ref{fig:2layer_dataset_sub} and \ref{fig:3layer_dataset_sub} shows failure rate versus dataset size for a fixed training budget of 50k epochs. Across all sample and model sizes,  PMP statistically significantly outperforms uniform sampling. 
These results show with statistical significance that PMP improves sample efficiency by concentrating training on informative safety-critical boundary trajectories.

\noindent\textbf{\textit{Prediction Horizon}} --
Fig.~\ref{fig:horizon_sub} shows failure rate as the BRT horizon increases for the three-layer model with a budget of 70k epochs. As the horizon increases, PMP sampling leads to lower failure rates (e.g., 0.35$\rightarrow$0.16 at 4.5 s and 0.47$\rightarrow$0.36 at 6.0 s). 
These results indicate that PMP sampling is beneficial in longer horizon prediction tasks, where capturing the safe set boundary becomes more challenging.

\vspace{-0.9em}
\subsection{PMP Sampling Improves Safe Set Reconstruction} \label{sec:recon}
Fig. \ref{fig: learned_set} depicts the learned value function for the proxy (left), and comparison two-layer models trained with $50k$ epochs and $30k$ samples for both PMP (center), and uniform sampling (right).  
Notably PMP sampling captures the shape of the proxy whereas uniform sampling returns an overly optimistic set.  Fig. \ref{fig: learned_set} also depicts the prediction encoded into the safe set, which shrinks the feasible state space at high speeds. These results show that in order to navigate the turn, the vehicle must follow a racing line that maximizes curvature and clips the apex or preemptively reduce speed so that the outer track edge can be accessed in the turn. 

\begin{figure*}[t]
\vspace{0.2em}
    \centering
     \begin{subfigure}[t]{0.3\textwidth}
         \centering
         \includegraphics[scale = 0.25,trim={4cm 0 4cm 0}]{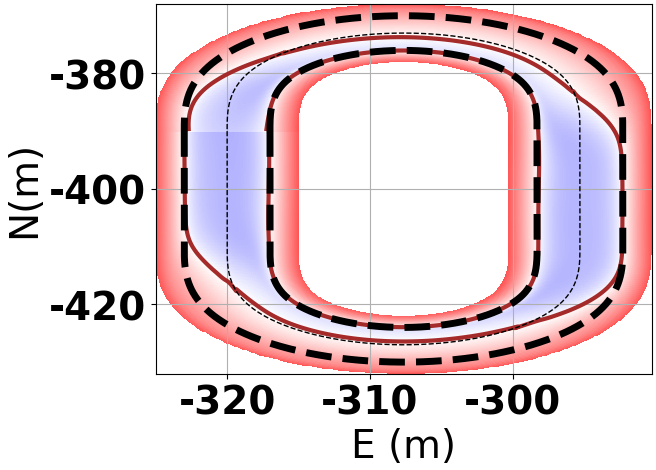}
         \caption{Model of Sec. \ref{sec: Net_Arch}.}
         \label{fig: base}
     \end{subfigure}
     \begin{subfigure}[t]{0.3\textwidth}
         \centering
         \includegraphics[scale = 0.25,trim={4cm 0 4cm 0}]{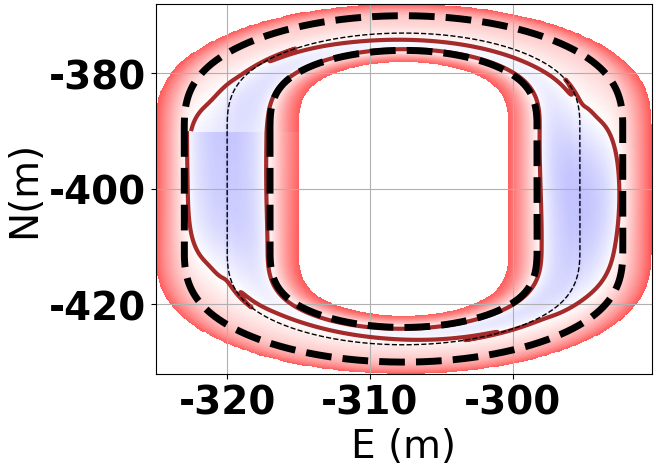}
         \caption{PMP model.}
         \label{fig: pmp}
     \end{subfigure}
    \begin{subfigure}[t]{0.3\textwidth}
         \centering
         \includegraphics[scale = 0.25,trim={4cm 0 4cm 0}]{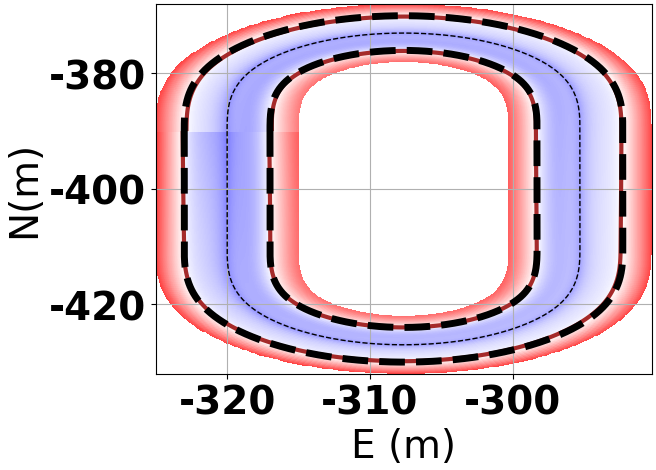}
         \caption{Uniform model.}
         \label{fig: uni}
     \end{subfigure}

    \caption{Value function at a speed of 10.5 m/s for $50k$ epochs and $30k$ samples for PMP sampling (center) and uniform sampling (right). Ground truth (Sec. \ref{sec: Net_Arch})  is shown on left. Brown solid line is the 0-level set of the value function and black dashed lines are track edges and centerline.}
    \label{fig: learned_set}
    \vspace{-1.5em}

\end{figure*}

\begin{table}[t]
\centering
\setlength{\tabcolsep}{3pt}
\begin{tabular}{llcc}
\toprule
Layers & Setting & PMP & Uniform \\
\midrule
2 & 30k epochs & $\mathbf{0.7 \pm 0.01}$ & $0.40 \pm 0.01$ \\
2 & 50k epochs & $\mathbf{0.73 \pm 0.01}$ & $0.46 \pm 0.04$ \\
2 & 70k epochs & $\mathbf{0.74 \pm 0.01}$ & $0.57 \pm 0.08$ \\
\midrule
2 & 45k samples & $\mathbf{0.72 \pm 0.01}$ & $0.52 \pm 0.02$ \\
2 & 60k samples & $\mathbf{0.73 \pm 0.01}$ & $0.51 \pm 0.04$ \\
\midrule
3 & 30k epochs & $\mathbf{0.6 \pm 0.03}$ & $0.51 \pm 0.05$ \\
3 & 50k epochs & $\mathbf{0.74 \pm 0.01}$ & $0.65 \pm 0.06$ \\
3 & 70k epochs & $\mathbf{0.75 \pm 0.005}$ & $0.72 \pm 0.03$ \\
\midrule
3 & 45k samples & $\mathbf{0.73 \pm 0.01}$ & $0.63 \pm 0.07$ \\
3 & 60k samples & $\mathbf{0.74 \pm 0.016}$ & $0.57 \pm 0.08$ \\
\bottomrule
\end{tabular}
\caption{IOU across model capacity and dataset size. Bold indicates where PMP sampling outperforms uniform sampling with statistical significance.}
\vspace{-2em}
\label{tab: IOU}
\end{table}

To quantitatively evaluate learned safe set representation, we measure the intersection-over-union (IOU) between the learned safe set and the ground truth proxy (Table \ref{tab: IOU}).
PMP sampling consistently yields higher IOU than uniform sampling, indicating improved approximation of the safe-set geometry. 
These results complement the results for failure rate and further demonstrate that PMP improves performance by concentrating samples on informative boundary trajectories, leading to more accurate safe set reconstruction. 

\vspace{-0.6em}
\subsection{Discussion}
From these results, we observe that PMP sampling improves performance in the most challenging tasks. Particularly PMP sampling provides the most benefit for settings with limited budget, data, and longer horizons. By focusing on safety-critical boundary trajectories, PMP sampling  improves the informativeness of the training distribution, leading to more efficient and reliable safety filter learning.

\begin{figure*}[t]
    \centering
     \begin{subfigure}[t]{0.32\textwidth}
         \centering
         \includegraphics[scale = 0.27,trim={5cm 0 5cm 0}]{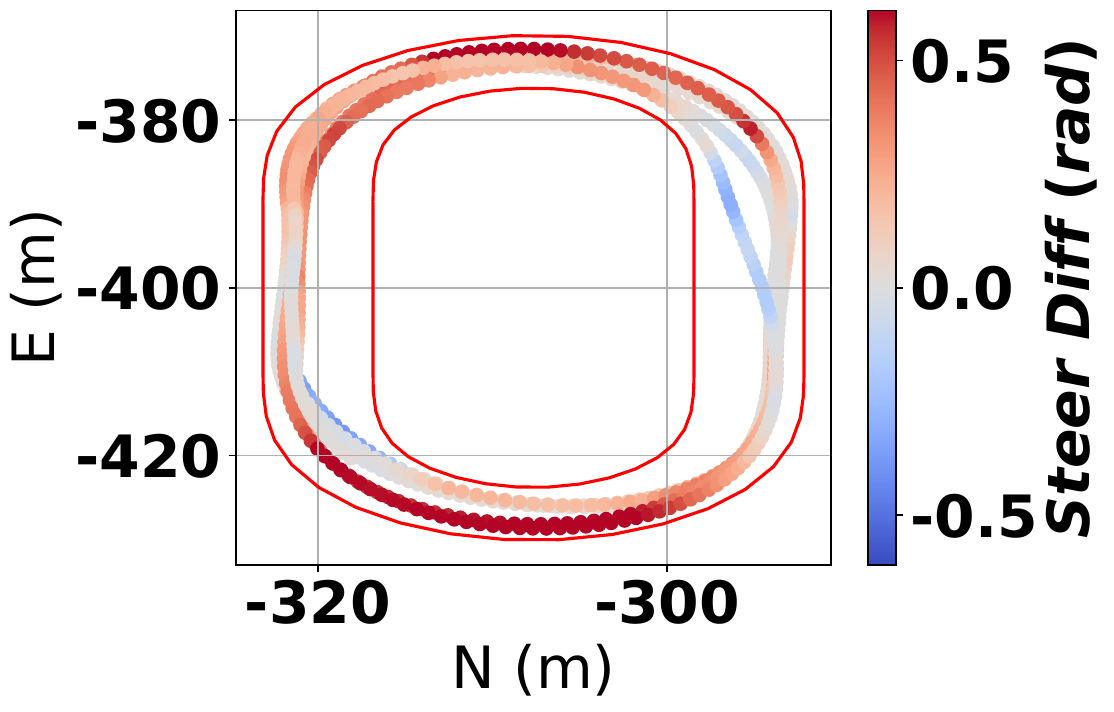}
         \caption{Steering difference.}
         \label{fig: delt_0pt1}
     \end{subfigure}
     \begin{subfigure}[t]{0.32\textwidth}
         \centering
         \includegraphics[scale = 0.27,trim={5cm 0 5cm 0}]{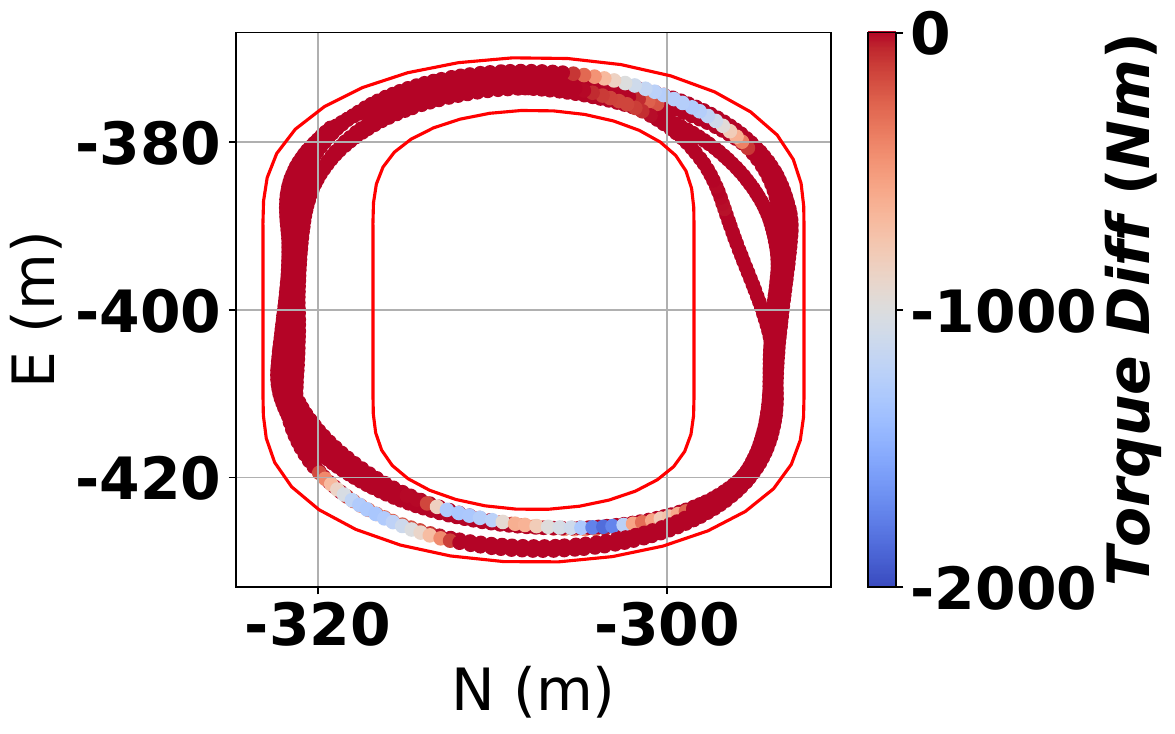}
         \caption{Axle torque difference.}
         \label{fig: tau_0pt1}
     \end{subfigure}
     \begin{subfigure}[t]{0.32\textwidth}
         \centering
         \includegraphics[scale = 0.27,trim={5cm 0 5cm 0}]{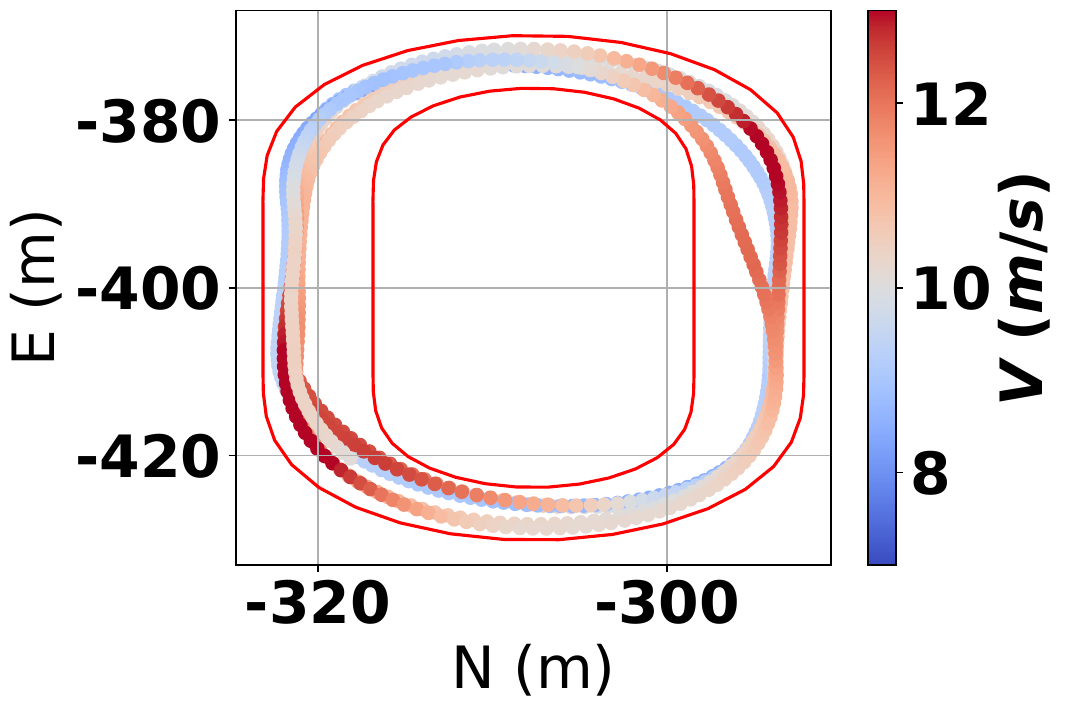}
         \caption{Velocity.}
         \label{fig: V_0pt1}
     \end{subfigure}
    \caption{Shared control experiment with $\gamma=0.1$ for steering difference (left) torque difference (center), and velocity (right). Color depicts intervention (left, center) and velocity magnitude (right).}
    \label{fig: exp_0pt1}
    \vspace{-1.5em}
\end{figure*}

\vspace{-0.6em}
\subsection{Experimental Validation} \label{sec: exp}

Next, we show that the approach yields predictive behavior and bridges the sim-to-real gap in shared control experiments where a driver attempts to race an oval track in the vehicle of Sec. \ref{sec: exp_veh} and the CBVF-QP filters commands to ensure safety. Fig. \ref{fig: delt_0pt1} shows the steering intervention as the difference between the CBVF-QP and driver steering request at the roadwheel, and Fig. \ref{fig: tau_0pt1} shows the torque intervention as the difference between the CBVF-QP and driver torque request. 

Throughout the experiment, the vehicle stays within the road edges (red solid lines), and proactive intervention occurs on the driver inputs. As the driver approaches the track edge, large steering interventions occur augmenting the driver steering request by up to $0.6 rad$ to prevent safety violations. Regarding throttle intervention, the CBVF-QP allows for acceleration on the straight (red) and rapidly reduces the commanded torque by over $1200 ~ Nm$ as the vehicle enters the turns which require a reduced speed to successfully navigate. This behavior leads to the speed reducing from $13.5 \,m/s$ on straights to $9.7 m/s$ as the turns are entered, as shown in Fig. \ref{fig: V_0pt1}. Importantly, these results demonstrate predictive behavior by slowing the vehicle preemptively to successfully navigate the turn, rather than operating in a myopic fashion. The minimal intervention is also observed, where the vehicle follows the driver intentions; e.g., on straights the driver steering and torque is respected and intervention primarily occurs at turns and where track edges are approached. 

Fig. \ref{fig: forward} shows a segment where the driver fails to successfully navigate a turn and the safety filter prevents track departure. The driver steering request is shown in green (\ref{fig: forward_delta}) and the output of the CBVF-QP is shown in blue. The measured vehicle position is shown in blue (\ref{fig: forward_e}).  The CBVF-QP prevents the vehicle from violating safety and leaving the track edge (red). In contrast, a post-hoc simulation that utilizes the vehicle model of Sec. \ref{sec:app} to simulate the response of the vehicle had the driver inputs been directly applied leads to the vehicle departing outside the outer track edge (green) due to inadequate steering while navigating the turn. These results demonstrate the utility of the proposed approach to promote safety in real world experiments. Finally, wall times are on the order of $3 ms$ yielding real-time applicability.

\begin{figure}[]
    \centering
     \begin{subfigure}[t]{0.49\columnwidth}
         \centering
         \includegraphics[scale = 0.24,trim={5cm 0 5cm 0}]{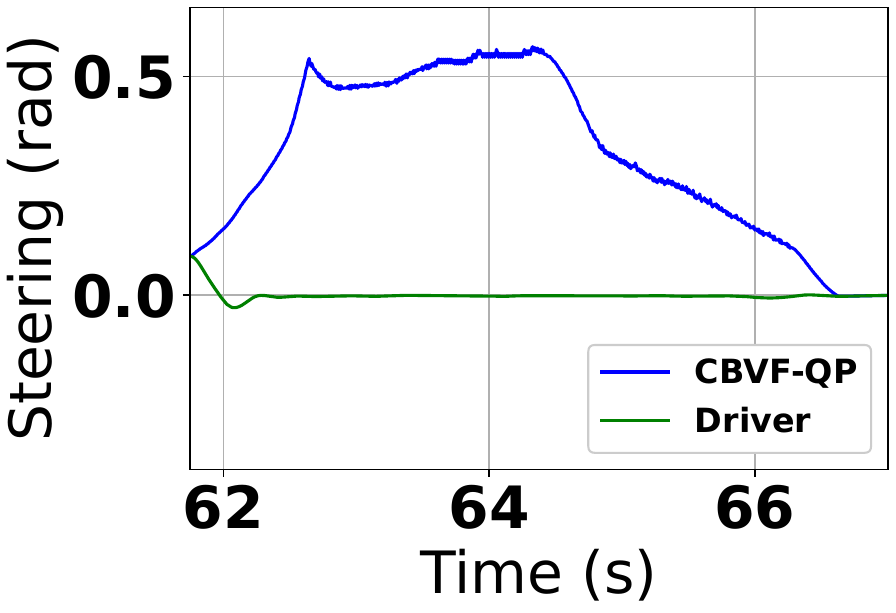}
         \caption{Steering.}
         \label{fig: forward_delta}
     \end{subfigure}
     \begin{subfigure}[t]{0.49\columnwidth}
         \centering
         \includegraphics[scale = 0.24,trim={5cm 0 5cm 0}]{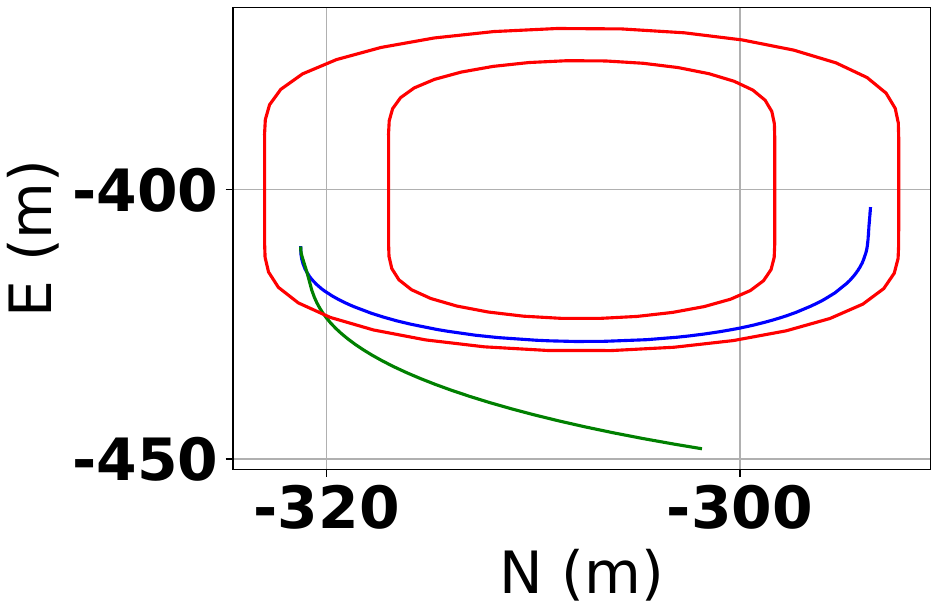}
         \caption{Position.}
         \label{fig: forward_e}
     \end{subfigure}
    \caption{Segment of $\gamma=0.1$ experiment for steering (left) and position (right). CBVF-QP output is shown in blue and driver request in green). Track edges are shown in red.}
    \label{fig: forward}
    \vspace{-1.5em}
\end{figure}

\vspace{-1.5em}
\section{Conclusion} \label{sec:conc}
This paper presents an approach to reduce the myopic behavior of safety filters and improve training efficiency and accuracy of learned HJ reachable sets. 
Conditions based on Pontryagin's Maximum Principle enable generation of boundary defining trajectories, augmenting the training set to improve training efficiency.
These trajectories enable learning a CBVF that directly encodes future trajectories into the safe set in a efficient manner despite high dimensionality and complex dynamics.
A minimally invasive quadratic program for shared control of vehicles in a racing scenario is presented based upon the learned CBVF.
Simulations and experiments demonstrate the approach yields proactive control augmentation and improves learning efficiency, yielding faster convergence, reduced failure rates, and improved safe set reconstruction.

In future work,  it would be interesting to further investigate the relationship between sampling schemes for learning, the geometry of safe sets, and controllability properties of dynamical systems. For instance, uniform sampling around $\partial{\mathcal{C}}$ might work well for the system $\dot{x}_t=u_t$, whereas systems with complicated dynamics and limited control authority like in \eqref{eq:bike} require more sophisticated schemes.

\appendix \label{appendix}

We present a method to account for box-constrained control inputs for dynamically extended systems, by deriving an equivalent representation of the dynamics without explicitly enforcing constraints on the states corresponding to the dynamically extended input $u$.

A dynamically extended system with state $(x,u)$ and control input $\dot{u}$ takes the control-affine form
\begin{align*}
\begin{bmatrix}
\dot x\\[-1mm] \dot u
\end{bmatrix}&=f\left(\begin{bmatrix}
x\\[-1mm]u
\end{bmatrix}\right)+g\left(\begin{bmatrix}
x\\[-1mm]u
\end{bmatrix}\right)\dot u, \\
    u&\in[-\overline{u},\overline{u}], ~
    \quad\\
    \dot u
    &\in\mathbb{U}=[-\overline{\dot u},\overline{\dot u}].
\end{align*}

\textit{New system}: 
Applying the PMP directly to this system would require considering the state constraints $u\in[-\overline{u},\overline{u}]$. Instead, we propose an equivalent representation of this system that implicitly enforces the original state constraints on $u$.
%
%
%
We express the modified system as:  
$$
\begin{bmatrix}
\dot x\\[-1mm] \dot u
\end{bmatrix}=f(x, u)+g(x, u)\dot u=\begin{bmatrix}
f_1(x)\\\vdots \\f_n(x)\\f_u(x, u)
\end{bmatrix}
+
\begin{bmatrix}
0\\\vdots\\ 0\\g_u(x, u)
\end{bmatrix}\dot u
$$
\noindent
where $f_i(x) ~ \forall i \in\{1, \dots, n\}$ denote the dynamics of the original state variables, and the final state corresponds to the dynamically extended input $u$, with dynamics defined as:
\begin{align*}
f_u(x)&=\begin{cases}
    0
    &\text{if } u \in(-\overline{u},\overline{u})
    \\
    \frac{\overline{\dot u}}{2}
    &\text{if }u=-\overline{u}
    \\
    -\frac{\overline{\dot u}}{2}
    &\text{if }u=\overline{u},
    \end{cases}
    \\
    g_u(x)&=\begin{cases}
    1
    &\text{if }u\in(-\overline{u},\overline{u})
    \\
    \frac{1}{2}
    &\text{if }u=-\overline{u}\text{ or }u=\overline{u},
    \end{cases}
\end{align*}
The dynamics of $u$ are now
$$
\dot u_t = \begin{cases}
    \dot u
    &\text{if }u\in(-\overline{u},\overline{u})
    \\
    \frac{\overline{\dot u}+\dot u}{2}
    &\text{if }u=-\overline{u}
    \\
    \frac{-\overline{\dot u}+\dot u}{2}
    &\text{if }u=\overline{u}.
    \end{cases}
\text{with }\dot u\in[-\overline{\dot u},\overline{\dot u}]
$$
This representation implicitly enforces the box constraints on $u$ within the dynamics. We have that:
\begin{itemize}
    \item If $u \in(-\overline{u},\overline{u})$, then no constraint is active.
    \item If $u=-\overline{u}$, then $\dot u_t\in[0,\overline{\dot u}]\geq 0$.
    \item If $u=\overline{u}$, then $\dot u_t\in[-\overline{\dot u},0]\leq 0$.
\end{itemize}
Furthermore, a smooth approximation (e.g., tanh) can be applied to satisfy the smoothness assumptions of Sec. \ref{sec:approach}. This implicit representation allows for state constraints to be easily accounted for and to use the standard PMP that underlies our proposed data augmentation scheme.

\bibliographystyle{IEEEtran}

\bibliography{References}

\begin{thebibliography}{10}
\providecommand{\url}[1]{#1}
\csname url@samestyle\endcsname
\providecommand{\newblock}{\relax}
\providecommand{\bibinfo}[2]{#2}
\providecommand{\BIBentrySTDinterwordspacing}{\spaceskip=0pt\relax}
\providecommand{\BIBentryALTinterwordstretchfactor}{4}
\providecommand{\BIBentryALTinterwordspacing}{\spaceskip=\fontdimen2\font plus
\BIBentryALTinterwordstretchfactor\fontdimen3\font minus \fontdimen4\font\relax}
\providecommand{\BIBforeignlanguage}[2]{{%
\expandafter\ifx\csname l@#1\endcsname\relax
\typeout{** WARNING: IEEEtran.bst: No hyphenation pattern has been}%
\typeout{** loaded for the language `#1'. Using the pattern for}%
\typeout{** the default language instead.}%
\else
\language=\csname l@#1\endcsname
\fi
#2}}
\providecommand{\BIBdecl}{\relax}
\BIBdecl

\bibitem{WABERSICH2021109597}
K.~P. Wabersich and M.~N. Zeilinger, ``A predictive safety filter for learning-based control of constrained nonlinear dynamical systems,'' \emph{Automatica}, vol. 129, p. 109597, 2021.

\bibitem{ames2019control}
A.~D. Ames, S.~Coogan, M.~Egerstedt, G.~Notomista, K.~Sreenath, and P.~Tabuada, ``Control barrier functions: Theory and applications,'' in \emph{European Control Conference}, 2019, pp. 3420--3431.

\bibitem{AVEC2024}
C.~Jiang, H.~Gan, I.~V{\"o}r{\"o}s, D.~Tak{\'a}cs, and G.~Orosz, ``Safety filter for lane-keeping control,'' in \emph{16th International Symposium on Advanced Vehicle Control}, 2024, pp. 1--6.

\bibitem{Dallas_2025}
J.~Dallas, J.~Talbot, M.~Suminaka, M.~Thompson, T.~Lew, G.~Orosz, and J.~Subosits, ``Control barrier functions for shared control and vehicle safety,'' in \emph{American Control Conference}.\hskip 1em plus 0.5em minus 0.4em\relax IEEE, Jul. 2025, p. 4203–4210.

\bibitem{GARG2024100945}
K.~Garg, J.~Usevitch, J.~Breeden, M.~Black, D.~Agrawal, H.~Parwana, and D.~Panagou, ``Advances in the theory of control barrier functions: Addressing practical challenges in safe control synthesis for autonomous and robotic systems,'' \emph{Annual Reviews in Control}, vol.~57, p. 100945, 2024.

\bibitem{dtcbf}
J.~Zeng, B.~Zhang, and K.~Sreenath, ``Safety-critical model predictive control with discrete-time control barrier function,'' in \emph{American Control Conference}, 2021, pp. 3882--3889.

\bibitem{TSCBF}
M.~Vahs, R.~I.~C. Muchacho, F.~T. Pokorny, and J.~Tumova, ``Forward invariance in trajectory spaces for safety-critical control,'' in \emph{IEEE International Conference on Robotics and Automation}, 2025, pp. 3926--3932.

\bibitem{HJOverview}
S.~Bansal, M.~Chen, S.~Herbert, and C.~J. Tomlin, ``Hamilton-jacobi reachability: A brief overview and recent advances,'' in \emph{IEEE Conference on Decision and Control}, 2017.

\bibitem{CBVF}
J.~J. Choi, D.~Lee, K.~Sreenath, C.~J. Tomlin, and S.~L. Herbert, ``Robust control barrier–value functions for safety-critical control,'' in \emph{IEEE Conference on Decision and Control}, 2021.

\bibitem{deepreach}
S.~Bansal and C.~J. Tomlin, ``Deepreach: A deep learning approach to high-dimensional reachability,'' in \emph{IEEE International Conference on Robotics and Automation}, 2021, pp. 1817--1824.

\bibitem{RAISSI}
M.~Raissi, P.~Perdikaris, and G.~Karniadakis, ``Physics-informed neural networks: A deep learning framework for solving forward and inverse problems involving nonlinear partial differential equations,'' \emph{Journal of Computational Physics}, vol. 378, pp. 686--707, 2019.

\bibitem{MCCLENNY2023111722}
L.~D. McClenny and U.~M. Braga-Neto, ``Self-adaptive physics-informed neural networks,'' \emph{Journal of Computational Physics}, vol. 474, p. 111722, 2023.

\bibitem{feng2025}
Z.~Feng, L.~Qiu, and S.~Bansal, ``Bridging model predictive control and deep learning for scalable reachability analysis,'' in \emph{Robotics: Science and Systems}, 2025.

\bibitem{oh2025safetyagencyhumancenteredsafety}
D.~D. Oh, J.~Lidard, H.~Hu, H.~Sinhmar, E.~Lazarski, D.~Gopinath, E.~S. Sumner, J.~A. DeCastro, G.~Rosman, N.~E. Leonard, and J.~F. Fisac, ``Safety with agency: Human-centered safety filter with application to ai-assisted motorsports,'' 2025.

\bibitem{Agrachev2004}
A.~A. Agrachev and Y.~L. Sachkov, \emph{Control Theory from the Geometric Viewpoint}.\hskip 1em plus 0.5em minus 0.4em\relax Springer Berlin Heidelberg, 2004.

\bibitem{Trelat2012}
E.~Tr{\'{e}}lat, ``Optimal control and applications to aerospace: Some results and challenges,'' \emph{Journal of Optimization Theory \& Applications}, vol. 154, no.~3, pp. 713--758, 2012.

\bibitem{BonnardChyba2003}
B.~Bonnard and M.~Chyba, \emph{Singular Trajectories and their Role in Control Theory}.\hskip 1em plus 0.5em minus 0.4em\relax Springer Berlin Heidelberg, 2003.

\bibitem{Kurzhanski2000}
A.~B. Kurzhanski and P.~Varaiya, ``Ellipsoidal techniques for reachability analysis,'' in \emph{Hybrid Systems: Computation and Control}, 2000.

\bibitem{Natherson2025}
R.~A. Natherson and D.~J. Scheeres, ``Study of methods for computing directional reachability,'' \emph{{Journal of Guidance, Control, and Dynamics}}, vol.~48, no.~12, pp. 2860--2869, 2025.

\bibitem{lew2024convexhullsreachablesets}
T.~Lew, R.~Bonalli, and M.~Pavone, ``Convex hulls of reachable sets,'' \emph{IEEE Transactions on Automatic Control}, vol.~70, no.~12, pp. 8195--8209, 2025.

\bibitem{kim2025reachabilitybarriernetworkslearning}
M.~Kim, W.~Sharpless, H.~J. Jeong, S.~Tonkens, S.~Bansal, and S.~Herbert, ``Reachability barrier networks: Learning hamilton-jacobi solutions for smooth and flexible control barrier functions,'' 2025.

\bibitem{LewEstimatingCH2024}
T.~Lew, R.~Bonalli, L.~Janson, and M.~Pavone, ``Estimating the convex hull of the image of a set with smooth boundary: Error bounds and applications,'' \emph{Discrete \& Computational Geometry}, vol.~74, no.~1, pp. 203--241, 2024.

\bibitem{Lee2012}
J.~M. Lee, \emph{Introduction to Smooth Manifolds}, 2nd~ed.\hskip 1em plus 0.5em minus 0.4em\relax Springer New York, 2012.

\bibitem{Wang2026}
Y.~Wang, C.~Hu, and Z.~Jin, ``Time-optimal switching surfaces for triple integrator under full box constraints,'' in \emph{American Control Conference}, 2026.

\bibitem{Boscain2005}
U.~Boscain and B.~Piccoli, ``An introduction to optimal control,'' \emph{Contrôle non linéaire et applications}, pp. 19--66, 2005.

\bibitem{dallas2023hierarchical}
M.~Thompson, J.~Dallas, J.~Y.~M. Goh, and A.~Balachandran, ``Adaptive nonlinear model predictive control: Maximizing tire force and obstacle avoidance in autonomous vehicles,'' \emph{IEEE Transactions on Field Robotics}, vol.~1, pp. 318--331, 2024.

\bibitem{Pacejka_2005}
H.~Pacejka, \emph{Tire and Vehicle Dynamics}.\hskip 1em plus 0.5em minus 0.4em\relax Elsevier, 2005.

\end{thebibliography}

\end{document}